\theoremstyle{plain}
\newtheorem{theorem}{Theorem}[section]
\newtheorem{proposition}[theorem]{Proposition}
\newtheorem{lemma}[theorem]{Lemma}
\theoremstyle{definition}
\theoremstyle{remark}
\newtheorem{remark}[theorem]{Remark}
\icmltitlerunning{Neural Laplace for learning Stochastic Differential Equations}
\begin{document}

\twocolumn[
\icmltitle{Neural Laplace for learning Stochastic Differential Equations}



\icmlsetsymbol{equal}{*}

\begin{icmlauthorlist}
\icmlauthor{Adrien Carrel}{ic,cs}
\end{icmlauthorlist}

\icmlaffiliation{ic}{Department of Computing, Imperial College London, London, United Kingdom}
\icmlaffiliation{cs}{CentraleSupélec, Paris-Saclay University, Gif-sur-Yvette, France}

\icmlcorrespondingauthor{Adrien Carrel}{adrien.carrel22@imperial.ac.uk}

\icmlkeywords{Machine Learning, ICML}

\vskip 0.3in
]



\printAffiliationsAndNotice{\icmlEqualContribution} 

\begin{abstract}
Neural Laplace is a unified framework for learning diverse classes of differential equations (DE). For different classes of DE, this framework outperforms other approaches relying on neural networks that aim to learn classes of ordinary differential equations (ODE). However, many systems can't be modelled using ODEs. Stochastic differential equations (SDE) are the mathematical tool of choice when modelling spatiotemporal DE-dynamics under the influence of randomness. In this work, we review the potential applications of Neural Laplace to learn diverse classes of SDE, both from a theoretical and a practical point of view.

\end{abstract}

\section{Introduction}
\label{introduction}

Differential equations (DE) are important in many domains as they can model a wide variety of real-world situations. However, given a set of points sampled from a DE, it is difficult to predict the future trajectory without knowing the DE and its parameters. Many methods relying on neural networks have been developed to learn different classes of DE. Most of them have only been designed to predict Ordinary differential equations (ODE).

More recently, \cite{NeuralLaplace} developed a powerful framework based on the Laplace Transform for learning many classes of DE. However, this work has only been applied to certain types of DE. In this paper, we expand the scope of this method to another major class of DE, stochastic differential equations (SDE), to provide a more complete picture of this field. SDEs are used mostly in finance but may be applied in the foreseeable future in many other domains where randomness is part of the system, such as healthcare \cite{Breda2023}. In this paper, we investigate, in particular, the Geometric Brownian Motion.

Some applications of neural networks to learn SDE have been developed, like Neural SDE \cite{NSDE1,NSDE2}. However, they have been created to only solve this particular class of DE and may not be able to learn diverse classes of DE like Neural Laplace. Thus, in this paper, we focus on the potential applications of Neural Laplace to SDE to further confirm its \textit{unified framework} status.

\section{Problem formalism}
\label{problem}

\subsection{Goal}

After introducing some definitions from the Neural Laplace paper, we extend the Laplace Transform to stochastic processes. Then, we take a look at the well-known Geometric Brownian Motion and we show that the expectancy of its Laplace Transform can be calculated and that the variance of the norm of the Laplace Transform can be bounded. Therefore, under certain conditions, the norm of the Laplace Transform can be approximated with low uncertainty. As the imaginary part is driven by the imaginary part of the input (which is known), we can consider that the Laplace Transform is almost known and, therefore, that Neural Laplace can provide correct estimations after learning this class of DE.

\subsection{General definitions}

\textbf{Notation.} For a system with $D\in\mathbb{N}^{*}$ dimensions, we note for all $d\in\llbracket 1,D\rrbracket$ and for all $t\in\mathbb{R}$, the state of dimension $d$ at time $t$: $x_{d}(t)$. We define the function of time $x_{d}: \mathbb{R}\longrightarrow\mathbb{R}$, which is trajectory along the dimension $d$. $x_{d}(t)$ is the point on this trajectory at time $t$. Let $\textbf{x}(t)=\left [ x_{1}(t),\ldots,x_{D}(t)\right ]^{T}\in\mathbb{R}^{D}$ be the state vector at time $t$ and $\textbf{x}=\left [ x_{1},\ldots,x_{D}\right ]$ be the vector-valued trajectory. When conducting experiments later, we will consider a discrete set of observations $\mathcal{T}=\{ t_{1},\ldots,T\}$ to evaluate the state observations where $T=t_{N}$ and $N\in\mathbb{N}^{*}$ is the number of observations.

\textbf{Laplace Transform.} The Laplace transform of trajectory $\textbf{x}$ is defined as:

\begin{equation}
\textbf{F}(\textbf{s})=\mathcal{L}\{\textbf{x}\}(s)=\int_{0}^{\infty}e^{-\textbf{s}t}\textbf{x}(t)dt
\end{equation}

where $\textbf{s},\textbf{F}(\textbf{s})\in\mathbb{C}^{D}$. $\textbf{F}(\textbf{s})$, also called the Laplace representation, may have singularities ($\|\textbf{F}(\textbf{s})\| \rightarrow +\infty$ for some $\textbf{s}\in\mathbb{C}^{D}$).

\textbf{Stochastic Differential Equations.} A SDE is a differential equation in which one or more of the terms is a stochastic process. The solution to this type of equation is also a stochastic process.

\textbf{Stochastic process.} Let $(\Omega,\mathcal{F},\mathbb{P})$ be a probability space and $(E,\mathcal{E})$ be a measurable space. A stochastic process is a collection of random variables $(X_{t})_{t\in \mathcal{T}}$ indexed by a set $\mathcal{T}$ where for all $t\in \mathcal{T}$, $X_{t}: (\Omega,\mathcal{F},\mathbb{P})\longrightarrow (E,\mathcal{E})$.

\textbf{Brownian motion.} Let $(\Omega,\mathcal{F},\mathbb{P},(\mathcal{F}_{t})_{t\in\mathbb{R}_{+}})$ be a probability space with a filtration. A standard Brownian motion (or standard Wiener process) is a continuous centered gaussian process $\{B_{t},t\in\mathbb{R}_{+}\}$ such that $\forall s,t\in\mathbb{R}_{+}$, $Cov(B_{t},B_{s})=min(t,s)$, and for all $t\geq 0$, $B_{t}$ is $\mathcal{F}_{t}$-measurable.

In particular, a Brownian motion verifies the following properties:

\begin{itemize}
\item $B_{0}=0$ (standard)
\item $t\mapsto B_{t}(w)$ is continuous for almost all $w\in\Omega$
\item For all $s,t\in\mathbb{R}_{+}$ such that $s<t$, $B_{t}-B_{s}\sim \mathcal{N}(0,t-s)$ (gaussian distribution with mean 0 and variance $t-s$)
\item Independent increments: for all $n\in\mathbb{N}^{*}$, for all $t_{1},\ldots,t_{n}\in\mathbb{R}_{+}$ such that $0=t_{0}<t_{1}<\ldots<t_{n}$, the random variables $(B_{t_{k}}-B_{t_{k-1}})_{k\in\llbracket 1, n\rrbracket}$ are independent.
\end{itemize}

This stochastic process is often used to model randomness in SDEs, including the Geometric Brownian Motion.

\textbf{Geometric Brownian Motion.} Let $(\Omega,\mathcal{F},\mathbb{P},(\mathcal{F}_{t})_{t\in\mathbb{R}_{+}})$ be a probability space with a filtration and $X=(X_{t})_{t\in\mathbb{R}_{+}}$ be a stochastic process that takes value in $(\mathbb{R},\mathcal{B}(\mathbb{R}))$.

$X$ is a Geometric Brownian Motion (GBM) if it is a solution of the SDE:

\begin{equation}
dX_{t}=\mu X_{t} dt + \sigma X_{t} B_{t}
\end{equation}

where $\mu\in\mathbb{R}$ and $\sigma \geq 0$ are two constants and $B=(B_{t})_{t\in\mathbb{R}_{+}}$ is a Brownian motion.

Using Itô's calculus, one can calculate an explicit solution for GBMs:

\begin{equation}
X_{t} = X_{0} e^{\left (\mu - \frac{\sigma^{2}}{2}\right )t + \sigma B_{t}}
\end{equation}

\subsection{Properties of Geometric Brownian Motions}

\begin{lemma}
\label{lemma:gmb}
If $Z$ is a standard normal distribution ($Z\sim\mathcal{N}(0, 1)$), then $\forall \lambda\in\mathbb{R}, \mathbb{E}\left [ e^{\lambda Z}\right ]=e^{\frac{\lambda^{2}}{2}}$
\end{lemma}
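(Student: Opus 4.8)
The plan is to compute the expectation $\mathbb{E}\left[e^{\lambda Z}\right]$ directly from the definition of expectation, by integrating against the standard normal density. This is the moment generating function of $Z$ evaluated at $\lambda$, and the cleanest route is a direct integral calculation using the technique of completing the square in the exponent.

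First I would write out the expectation explicitly as
\begin{equation}
\mathbb{E}\left[e^{\lambda Z}\right] = \int_{-\infty}^{\infty} e^{\lambda z}\, \frac{1}{\sqrt{2\pi}}\, e^{-\frac{z^{2}}{2}}\, dz,
\end{equation}
using the fact that $Z\sim\mathcal{N}(0,1)$ has density $\frac{1}{\sqrt{2\pi}}e^{-z^{2}/2}$. Next I would combine the two exponentials and complete the square in the exponent, writing $\lambda z - \frac{z^{2}}{2} = -\frac{1}{2}(z-\lambda)^{2} + \frac{\lambda^{2}}{2}$. This lets me factor the constant $e^{\lambda^{2}/2}$ outside the integral, leaving
\begin{equation}
\mathbb{E}\left[e^{\lambda Z}\right] = e^{\frac{\lambda^{2}}{2}} \int_{-\infty}^{\infty} \frac{1}{\sqrt{2\pi}}\, e^{-\frac{(z-\lambda)^{2}}{2}}\, dz.
\end{equation}

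The final step is to recognise the remaining integral as the total mass of a $\mathcal{N}(\lambda,1)$ density, which integrates to $1$ (equivalently, apply the substitution $u = z-\lambda$ and invoke the Gaussian integral $\int_{-\infty}^{\infty} e^{-u^{2}/2}\,du = \sqrt{2\pi}$). This collapses the expression to $e^{\lambda^{2}/2}$, giving the claim. Since $\lambda\in\mathbb{R}$ is arbitrary and the integrand decays fast enough for the integral to converge for every real $\lambda$, the identity holds for all $\lambda\in\mathbb{R}$ without further restriction.

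I do not anticipate a genuine obstacle here: the only point requiring a little care is justifying the completion of the square and ensuring convergence of the Gaussian integral, both of which are standard. An alternative I would keep in mind is to quote the known moment generating function of a Gaussian directly, but since the statement is elementary, carrying out the one-line integral is cleaner and self-contained.
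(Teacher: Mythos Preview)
Your proposal is correct and follows essentially the same route as the paper: write the expectation as an integral against the standard normal density, complete the square in the exponent, pull out $e^{\lambda^{2}/2}$, and note that the remaining integral is the total mass of a Gaussian density. There is no substantive difference between your argument and the paper's.
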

\begin{proof}
Let $Z$ be a random variable such that $Z\sim\mathcal{N}(0, 1)$. For all $\lambda\in\mathbb{R}$,

\begin{align*}
\mathbb{E}\left [ e^{\lambda Z}\right ] &= \int_{-\infty}^{+\infty}e^{\lambda x}\frac{1}{\sqrt{2\pi}}e^{-\frac{x^{2}}{2}}dx\\ 
 &= \int_{-\infty}^{+\infty}\frac{1}{\sqrt{2\pi}}e^{-\frac{(x-\lambda)^{2}}{2}}dx\\ 
 &= e^{\frac{\lambda^{2}}{2}}\int_{-\infty}^{+\infty}\frac{1}{\sqrt{2\pi}}e^{-\frac{x^{2}}{2}}dx\\
 &= e^{\frac{\lambda^{2}}{2}}
\end{align*}
\end{proof}

\begin{proposition}
\label{prop:gbm}
If $X=(X_{t})_{t\in\mathbb{R}_{+}}$ is a GBM, then for all $t\geq 0$:
\begin{itemize}
\item $\mathbb{E}[X_{t}]=X_{0}e^{\mu t}$
\item $\mathbb{V}[X_{t}]=X^{2}_{0}e^{2\mu t}\left ( e^{\sigma^2 t} - 1\right )$
\end{itemize}
\end{proposition}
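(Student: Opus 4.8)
The plan is to work directly from the explicit solution $X_t = X_0 e^{(\mu - \sigma^2/2)t + \sigma B_t}$ given above, reducing both moments to exponential moments of a Gaussian that are handled by \Cref{lemma:gmb}. The key observation is that, by the Brownian motion properties stated earlier, $B_t = B_t - B_0 \sim \mathcal{N}(0,t)$, so we may write $B_t = \sqrt{t}\,Z$ with $Z \sim \mathcal{N}(0,1)$. This turns every exponential moment of $B_t$ into an instance of the moment generating function computed in \Cref{lemma:gmb}, so the whole proof is an exercise in choosing the correct value of $\lambda$ and tracking exponents.

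For the mean, I would factor out the deterministic part, writing $\mathbb{E}[X_t] = X_0 e^{(\mu - \sigma^2/2)t}\,\mathbb{E}[e^{\sigma B_t}]$. Applying \Cref{lemma:gmb} with $\lambda = \sigma\sqrt{t}$ gives $\mathbb{E}[e^{\sigma B_t}] = e^{\sigma^2 t/2}$, and the two $\sigma^2 t/2$ contributions cancel, leaving $\mathbb{E}[X_t] = X_0 e^{\mu t}$.

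For the variance, I would use $\mathbb{V}[X_t] = \mathbb{E}[X_t^2] - (\mathbb{E}[X_t])^2$. Squaring the explicit solution gives $X_t^2 = X_0^2 e^{(2\mu - \sigma^2)t}\,e^{2\sigma B_t}$, and applying \Cref{lemma:gmb} with $\lambda = 2\sigma\sqrt{t}$ gives $\mathbb{E}[e^{2\sigma B_t}] = e^{2\sigma^2 t}$. Hence $\mathbb{E}[X_t^2] = X_0^2 e^{2\mu t + \sigma^2 t}$. Subtracting $(\mathbb{E}[X_t])^2 = X_0^2 e^{2\mu t}$ and factoring out $X_0^2 e^{2\mu t}$ produces the claimed expression $X_0^2 e^{2\mu t}(e^{\sigma^2 t} - 1)$.

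There is no genuine obstacle here: once the Gaussian identity is available, the argument is a short computation. The only point requiring minor care is the bookkeeping of the variance $t$ of $B_t$, namely absorbing it correctly into $\lambda = \sigma\sqrt{t}$ (respectively $\lambda = 2\sigma\sqrt{t}$) so that the exponents combine as stated; this is immediate from $B_t \sim \mathcal{N}(0,t)$ and needs no further justification.
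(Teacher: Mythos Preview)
Your proposal is correct and mirrors the paper's proof almost line for line: both factor out the deterministic part of the explicit solution, invoke \Cref{lemma:gmb} with $\lambda=\sigma\sqrt{t}$ for the mean and $\lambda=2\sigma\sqrt{t}$ for the second moment, and then use $\mathbb{V}[X_t]=\mathbb{E}[X_t^2]-\mathbb{E}[X_t]^2$. The only cosmetic difference is that the paper writes $B_t$ as $\sqrt{t}\,B_1$ via Brownian scaling whereas you write it as $\sqrt{t}\,Z$ with $Z\sim\mathcal{N}(0,1)$, which is the same identification.
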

\begin{proof} 
Let $X$ be a GBM and $t\in\mathbb{R}_{+}$.

1) \begin{align*}
\mathbb{E}[X_{t}] &= \mathbb{E}\left [ X_{0} e^{\left (\mu - \frac{\sigma^{2}}{2}\right )t + \sigma B_{t}} \right ]\\ 
 &= X_{0} e^{\left (\mu - \frac{\sigma^{2}}{2}\right )t}\mathbb{E}\left [ e^{\sigma B_{t}} \right ]\\
 &= X_{0} e^{\left (\mu - \frac{\sigma^{2}}{2}\right )t}\mathbb{E}\left [ e^{\sigma \sqrt{t}B_{1}} \right ]
\end{align*}

$B_{1}\sim\mathcal{N}(0, 1)$ so using \cref{lemma:gmb}, we obtain that $\mathbb{E}\left [ e^{\sigma \sqrt{t}B_{1}} \right ]=e^{\frac{\sigma^{2}t}{2}}$, hence the result.

2) $\mathbb{V}[X_{t}] = \mathbb{E}\left [ X^{2}_{t}\right ] - \mathbb{E}\left [ X_{t}\right ]^{2}$, and

\begin{align*}
\mathbb{E}[X^{2}_{t}] &= \mathbb{E}\left [ X^{2}_{0} e^{\left (2\mu - \sigma^{2}\right )t + 2\sigma B_{t}} \right ]\\ 
 &= X^{2}_{0} e^{\left (2\mu - \sigma^{2}\right )t}\mathbb{E}\left [ e^{2\sigma B_{t}} \right ]
\end{align*}

Applying \cref{lemma:gmb} again, we obtain $\mathbb{E}\left [ e^{2\sigma B_{t}} \right ]=e^{2\sigma^{2} t}$. Thus,

\begin{align*}
\mathbb{V}[X_{t}] &= X^{2}_{0}e^{\left (2\mu + \sigma^{2}\right )t}-\left ( X^{2}_{0}e^{\mu t}\right )^{2}\\
 &= X^{2}_{0}e^{2\mu t}\left ( e^{\sigma^2 t} - 1\right )
\end{align*}
\end{proof}

\textbf{Laplace Transform of a real continuous stochastic process.} To apply Neural Laplace to a SDE, we first need to define the Laplace Transform of a stochastic process. Let $X=(X_{t})_{t\in\mathbb{R}_{+}}$ be a stochastic process defined on a probability space $(\Omega,\mathcal{F},\mathbb{P},(\mathcal{F}_{t})_{t\in\mathbb{R}_{+}})$, that takes values in $(\mathbb{R}^{D},\mathcal{B}(\mathbb{R}^{D}))$ where $D>0$ is the dimension, and such that $t\mapsto X_{t}$ is almost surely continuous.

We can extend the Laplace Transform to a stochastic process by defining a function $\textbf{F}$ such that, for all $\textbf{s}\in\mathbb{C}^{D}$, $\textbf{F}(\textbf{s})$ is a stochastic process defined on the same probability space, that takes values in $(\mathbb{C}^{D},\mathcal{B}(\mathbb{C}^{D}))$, and that verifies:

$$\forall\omega\in\Omega, \textbf{F}(\textbf{s})(\omega)=\mathcal{L}\{X(\omega)\}(s)=\int_{0}^{\infty}e^{-\textbf{s}t}X_{t}(\omega)dt$$

We will note $\textbf{F}(\textbf{s})=\int_{0}^{\infty}e^{-\textbf{s}t}X_{t}dt$.

\subsection{Laplace Transform of a Geometric Brownian Motion}

\begin{theorem}
\label{theorem:gbm}
Let $X$ be a GBM with parameters $\mu\in\mathbb{R},\sigma\geq 0$. For all $s\in\mathbb{C}$ verifying $\Re (s)>\mu$, we have

$$\mathbb{E}\left [ F(s)\right ]=\frac{X_{0}}{s-\mu}$$

Moreover, if $s$ also verifies $\Re (s)>0$ and if $\mu -\frac{\sigma^{2}}{2}>0$, then

$$\mathbb{V}\left [ \|F(s)\|\right ]\leq\frac{\|X_{0}\|^{2}}{4\Re(s)(\mu-\frac{\sigma^{2}}{2})}$$

\end{theorem}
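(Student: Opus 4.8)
The plan is to treat the two claims separately, handling the expectation by Fubini and the variance by reducing it to a second-moment estimate.

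For the first claim I would write $\mathbb{E}[F(s)]=\mathbb{E}\left[\int_{0}^{\infty}e^{-st}X_{t}\,dt\right]$ and interchange expectation and integration. This is justified by Fubini--Tonelli once we check $\int_{0}^{\infty}e^{-\Re(s)t}\mathbb{E}[|X_{t}|]\,dt<\infty$; since $\mathbb{E}[|X_{t}|]=|X_{0}|e^{\mu t}$ by \cref{prop:gbm}, the integral is a multiple of $\int_{0}^{\infty}e^{-(\Re(s)-\mu)t}\,dt$, which converges precisely under the hypothesis $\Re(s)>\mu$ and makes $\mathbb{E}[F(s)]=X_{0}\int_{0}^{\infty}e^{-(s-\mu)t}\,dt=\frac{X_{0}}{s-\mu}$. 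This step is routine.

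For the variance bound, the first move is to discard the nonnegative squared mean: $\mathbb{V}[\|F(s)\|]=\mathbb{E}[\|F(s)\|^{2}]-(\mathbb{E}\|F(s)\|)^{2}\le\mathbb{E}[|F(s)|^{2}]$, so it suffices to control the second moment. Because $X_{t}$ has constant sign, the triangle inequality for integrals gives $|F(s)|\le\int_{0}^{\infty}e^{-\Re(s)t}|X_{t}|\,dt$, which crucially removes the oscillatory factor $e^{-i\Im(s)t}$; this is the rigorous counterpart of the remark that the imaginary part is driven by the known imaginary part of the input. I would then apply the Cauchy--Schwarz inequality to $\int_{0}^{\infty}e^{-\Re(s)t}|X_{t}|\,dt$, splitting the integrand so that one factor is a purely deterministic decaying exponential contributing a factor of order $1/\Re(s)$ after integration, while the other carries the randomness. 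Passing the expectation through the resulting product by Tonelli (all terms nonnegative) and inserting the moment of $X_{t}$ computed from the explicit solution together with \cref{lemma:gmb} should collapse the estimate to the claimed $\dfrac{\|X_{0}\|^{2}}{4\Re(s)(\mu-\frac{\sigma^{2}}{2})}$, with the hypotheses $\Re(s)>0$ and $\mu-\frac{\sigma^{2}}{2}>0$ entering exactly as the convergence conditions for the two one-dimensional integrals.

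The main obstacle is the second-moment integral. A naive split leaves $\int_{0}^{\infty}\mathbb{E}[X_{t}^{2}]\,dt$, and since $\mathbb{E}[X_{t}^{2}]=X_{0}^{2}e^{(2\mu+\sigma^{2})t}$ grows exponentially this diverges; the decay required for convergence must be manufactured by reallocating part of $e^{-\Re(s)t}$ onto the stochastic factor, and it is the log-drift $\mu-\frac{\sigma^{2}}{2}$ (not $\mu+\frac{\sigma^{2}}{2}$) that must survive in the denominator, so pinning down the exact constant hinges on choosing the Cauchy--Schwarz weighting correctly. As a rigorous fallback that sidesteps this constant-chasing, I would center the process: by Jensen $\mathbb{V}[\|F(s)\|]\le\mathbb{E}[|F(s)-\mathbb{E}F(s)|^{2}]$, and expanding the modulus squared as a double integral gives $\int_{0}^{\infty}\!\int_{0}^{\infty}e^{-\Re(s)(t+u)}\cos(\Im(s)(t-u))\,\mathrm{Cov}(X_{t},X_{u})\,dt\,du$. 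Using $\mathrm{Cov}(X_{t},X_{u})=X_{0}^{2}e^{\mu(t+u)}\big(e^{\sigma^{2}\min(t,u)}-1\big)$ (again from the explicit solution and \cref{lemma:gmb}), bounding $\cos\le 1$, and evaluating the double integral by splitting on $\{t<u\}$ and $\{t>u\}$ produces a closed-form majorant from which the stated inequality should follow under the given hypotheses.
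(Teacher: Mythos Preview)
Your primary route is exactly the paper's: Fubini for $\mathbb{E}[F(s)]$, and for the variance the chain $\mathbb{V}[\|F(s)\|]\le\mathbb{E}[\|F(s)\|^{2}]$, triangle inequality, Cauchy--Schwarz isolating $|X_{0}|e^{-\Re(s)t}$ as the deterministic factor, then Tonelli and \cref{lemma:gmb}. Your caution about the split is well placed: the paper puts all of $e^{-\Re(s)t}$ on the deterministic side, and its displayed stochastic factor carries a sign slip ($-2\mu t$ where the square of $e^{(\mu-\sigma^{2}/2)t+\sigma B_{t}}$ gives $+2\mu t$) which, once corrected, produces the divergent $\int_{0}^{\infty}e^{(2\mu+\sigma^{2})t}\,dt$---precisely the obstacle you flagged---so your reallocation idea or the covariance double-integral fallback (neither of which the paper pursues) is what a rigorous argument would actually require.
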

\begin{proof}
Let $X=(X_{t})_{t\in\mathbb{R}_{+}}$ be a Geometric Brownian Motion with parameters $\mu\in\mathbb{R},\sigma\geq 0$, and $s\in\mathbb{C}$.

1) \begin{align*}
\mathbb{E}[F(s)] &= \mathbb{E}\left [ \int_{0}^{+\infty}X_{t}e^{-st}dt \right ]\\
 &= \int_{0}^{+\infty}\mathbb{E}\left [ X_{t}e^{-st} \right ]dt\text{ (*)}\textsuperscript{1}\\
 &= \int_{0}^{+\infty} X_{0}e^{(\mu-s)t}dt\\
 &= \frac{X_{0}}{s-\mu}
\end{align*}

*: We can apply Fubini's theorem because $\int_{X\times T}|f(x,y)|d(x,y)<+\infty$ and $f$ is measurable, where $X=[-\infty,+\infty [$, $T=[0,+\infty [$, and $f:(x,t)\mapsto X_{0}e^{\left (\mu - s - \frac{\sigma^{2}}{2} \right )t + \sigma \sqrt{t} x}\frac{1}{\sqrt{2\pi}}e^{-\frac{x^{2}}{2}}$

2) We now that $\mathbb{V}\left [ \|F(s)\| \right ]\leq \mathbb{E}\left [ \|F(s)\|^{2} \right ]$, Now let's find an upper bound of $\mathbb{E}\left [ \|F(s)\|^{2} \right ]$.

\begin{align*}
\mathbb{E}\left [ \|F(s)\|^{2} \right ] &= \mathbb{E}\left [ \left \| \int_{0}^{+\infty}X_{t}e^{-st}dt \right \|^{2} \right ]\\
 &\leq \mathbb{E}\left [ \left (\int_{0}^{+\infty}\|X_{t}e^{-st}\|dt \right )^{2} \right ]\\
 &\leq \mathbb{E}\bigg[ \left (\int_{0}^{+\infty}\|X_{0}\|^{2}e^{-2\Re(s)t}dt \right )\times\\
 & \left (\int_{0}^{+\infty}e^{-\sigma^{2}t-2\mu t+2\sigma\sqrt{t}B_{1}}dt \right ) \bigg]\text{ (**)}\textsuperscript{2}\\
 &\leq \frac{\|X_{0}\|^{2}}{2\Re(s)}\int_{0}^{+\infty}e^{-\sigma^{2}t-2\mu t}\mathbb{E}\left [ e^{2\sigma\sqrt{t}B_{1}} \right ]dt\text{ (*)}\textsuperscript{1}\\
 &\leq \frac{\|X_{0}\|^{2}}{2\Re(s)}\int_{0}^{+\infty} e^{(\sigma^{2}-2\mu)t}dt\\
 &\leq \frac{\|X_{0}\|^{2}}{4\Re(s)(\mu-\frac{\sigma^{2}}{2})}
\end{align*}
\end{proof}
\footnotetext{\textsuperscript{1}*: Fubini's Theorem}
\footnotetext{\textsuperscript{2}**: Cauchy-Schwarz inequality}

\section{Proposed method}
\label{method}
According to \cref{theorem:gbm}, for a stochastic process $X$ solution of the GBM SDE with a low initial value $|X_{0}|$, a low volatility $\sigma\geq 0$, and a high drift $\mu$, we can expect that for most $s$ in the complex domain $\mathbb{C}$, $F(s)$ is well-defined and $F(s)\approx\mathbb{E}\left [ F(s)\right ]$. This is coherent with the fact that $X$ is essentially driven by the drift and not really affected by the Brownian motion, thus limiting the randomness.

\textbf{Experiment.} We generate $N_{trajectories}=200$ trajectories with $N_{samples}=200$ points each sampled uniformly in $[\frac{T}{N_{samples}}, T]$ where $T=1$. For all $i\in\llbracket 1, N_{trajectories} \rrbracket$, a trajectory $\tau_{i}$ is characterized by its three parameters $X_{0,i},\mu_{i},\sigma_{i}$ where $X_{0,i}\sim\mathcal{U}(0.1, 1)$, $\mu_{i}\sim\mathcal{U}(4, 8)$ and $\sigma_{i}\sim\mathcal{U}(0.1, 1)$.

We run 100 epochs to train the Neural Laplace and all the other baseline models. The Root Mean Squared Error (RMSE) will be evaluated on a test set for 3 different seeds and the mean and the standard deviation (std) across the different training are reported in the table \cref{table:results} in the \textit{Results} section below.

\begin{remark}
\label{remark:mu}
Due to the choice of parameters to generate the trajectories, especially the drift $\mu$, $X_{t}\approx e^{\mu t}$, so we can expect that the other baseline models will also perform well as $X$ is approximately the solution of the ODE: $\frac{dX}{dt}(t)-\mu X = 0$.
\end{remark}

\section{Results}
\label{results}

\begin{table}[t]
\caption{Mean and Std of the test RMSE across 3 different seeds for the different methods.}
\label{table:results}
\vskip 0.15in
\begin{center}
\begin{small}
\begin{sc}
\begin{tabular}{lcccr}
\toprule
Method & Mean & Std \\
\midrule
ANODE (euler) & 0.387012 & 0.025194 \\
Latent ODE (ODE enc.) & 0.529489 & 0.074544 \\
NODE (euler) & 0.383214 & 0.033135 \\
Neural Flow Coupling & 0.577819 & 0.075091 \\
Neural Flow ResNet & 0.605377 & 0.037672 \\
Neural Laplace & 0.597754 & 0.095084 \\
\bottomrule
\end{tabular}
\end{sc}
\end{small}
\end{center}
\vskip -0.1in
\end{table}

Some of the baseline methods seem to perform better than Neural Laplace to predict future values of a Geometric Brownian Motion (see \cref{table:results}). However, this can be explained with the remark \cref{remark:mu}.

Nonetheless, the mean of the test RMSE for the Neural Laplace method is still low, which highlights the fact the method works.

\section{Conclusion}
\label{conclusion}
This short piece of work extended the definition of the Laplace Transform to stochastic processes in order to apply the Neural Laplace method to solve stochastic differential equations. From a theoretical point of view, we established conditions that a Geometric Brownian Motion must satisfy to potentially lead to better predictions when training a Neural Laplace method. Then, we conducted an experiment to show that Neural Laplace can learn from this class of differential equation in this particular case, and we compared the results with baseline models that work for ODE.

\section{Future work}
\label{future}
Among all the stochastic differential equations, we only investigated the Geometric Brownian Motion. Looking at other SDEs to assess the feasibility of using Neural Laplace in such settings both theoretically and empirically would be the first things to add to this work. Let alone the fact that GBM is not the best example, as highlighted by \cref{remark:mu}. Moreover, for GBM, testing other sets of values for $\mu$ and $\sigma$ may lead to some other results that could be investigated.

When comparing Neural Laplace with baseline methods, we didn't add Neural Stochastic Differential Equations methods.

Finally, as the Laplacian Transform of a stochastic process is a random variable, we could research how to provide confidence intervals to the predictions made by the Neural Laplace methods based on the uncertainty that we have on $F(s)$.

\bibliography{report}

\end{document}